\documentclass{article}

\usepackage[preprint]{neurips_2026}
\usepackage{graphicx}    
\usepackage{adjustbox}   
\usepackage{tikz}        
\usetikzlibrary{arrows.meta, positioning, calc}
\usepackage{float}
\usepackage{enumitem}
\usepackage[utf8]{inputenc} 
\usepackage[T1]{fontenc}    
\usepackage{hyperref}       
\usepackage{url}            
\usepackage{booktabs}       
\usepackage{amsfonts}       
\usepackage{nicefrac}       
\usepackage{microtype}      
\usepackage{xcolor}         
\usepackage{amsmath,amssymb,amsthm,mathtools}
\usepackage{thm-restate}
 \title{Tool Use Reduces Depth-Induced Collapse \\ in OOD Reasoning}

 \newtheorem{theorem}{Theorem}[section]

\newtheorem{lemma}[theorem]{Lemma}
\newtheorem{corollary}[theorem]{Corollary}

\usepackage{titlesec}

\author{%
  David Koplow$^{1}$ \qquad Tomer Galanti$^{2}$ \qquad Tomaso Poggio$^{1}$ \\
  $^{1}$Massachusetts Institute of Technology \\
  $^{2}$Texas A\&M University
}

\begin{document}
\maketitle

\begin{abstract}

Many current paths to more advanced AI depend on the assumption that large language models (LLMs) can generalize learned relationships to solve complex, out-of-distribution (OOD) problems. However, this is not an easy quality to measure. For most real-world problems and benchmarks it suffices to exploit a few memorized subproblems or a small fraction of the available data to produce a correct solution. This is interpolation. Generalization requires the capacity to make use of all available data to solve problems without these shortcuts. To test this quality, we introduce a synthetic Boolean circuit reconstruction benchmark over $GF(2)$. We use an adversarial sampling oracle to block partial-information shortcuts, ensuring that each step requires the integration of all historical context with new evidence. Our evaluations reveal that standalone LLMs fall significantly short of sustained reasoning: as problem depth increases, both small and frontier models exhibit a catastrophic collapse in their ability to accurately predict the next logical step. However, we also demonstrate that tool synthesis provides a remedy to this depth-induced reasoning collapse. When allowed to generate, execute, and iteratively refine code, even small architectures can sustain accurate reasoning over long horizons and rival frontier model performance on this task. These results indicate that synthesizing tools plays a crucial role in generalization, beyond the conventional view of merely augmenting systems with specific capabilities.


\end{abstract}

\section{Introduction}

Many proposed paths to more advanced AI systems assume that large language models can use relationships learned from in-distribution (ID) data to solve out-of-distribution (OOD) problems~\citep{wei2022cot,wang2022selfconsistency,pmlr-v202-fu23d,10.5555/3600270.3601883,shao2024deepseekmathpushinglimitsmathematical}. A particularly elegant formalization of this idea appears in the \emph{Diligent Learner} framework~\citep{shalev2025diligent}, where reasoning is modeled as search over partial solutions and success depends on a nontrivial probability of proposing a useful next step $\gamma$. If $\gamma$ remains non-negligible, search can scale to long reasoning horizons. If $\gamma$ decays with depth for some problems, then even a powerful search procedure becomes brittle. This leads to an important empirical question: \textbf{\textit{``Do current language models preserve non-trivial next-step accuracy on challenging problems, or are they effective only when a task can be handled using memorized subproblems and redundant information?''}}

Fig.~\ref{fig:motfig} highlights this contrast between in-distribution reasoning, where the answer is close to familiar training examples, and out-of-distribution reasoning, where the model must extrapolate learned relationships into previously unseen regions. Existing reasoning benchmarks do not well test this capability~\citep{cobbe2021trainingverifierssolvemath,hendrycks2021measuring,jimenez2024swebench}. They usually score only final answers, allow many valid intermediate paths, or contain enough redundancy for models to exploit memorized subproblems. Strong performance on such benchmarks can therefore show that a model solves a task class, but not that it can sustain stepwise out-of-distribution reasoning.

To isolate this ability, we need a benchmark in which each stage has a unique correct continuation, and that continuation can be recovered only by combining the revealed history with the new data. A model that ignores either source of information should fail. Such a benchmark lets us measure whether $\gamma$ really remains nontrivial as depth increases.

\begin{figure}[t]
    \centering
    \includegraphics[width=1.0\linewidth]{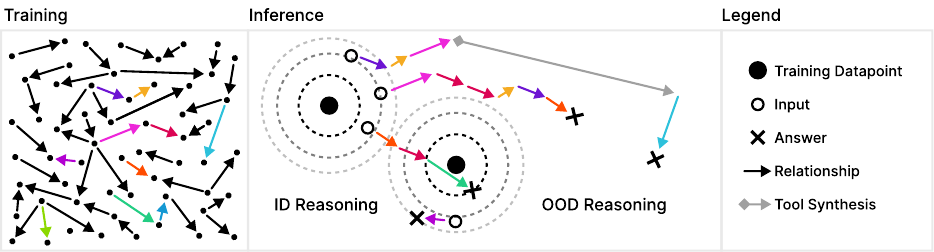}
 \caption{The \textbf{left} panel provides an abstract depiction of training. The dots represent training datapoints in an idea space, and the arrows denote relationships that link these datapoints. The \textbf{center} panel zooms to the region around a few datapoints and visualizes the problem-solving process during inference. The empty circles represent an initial query, and the LLM moves along the arrows corresponding to relationships learned in-distribution to locate an answer. If the answer and path lies near a training datapoint (dashed concentric circles), it is considered in-distribution reasoning; otherwise, it is out-of-distribution reasoning. The long gray arrow illustrates the synthesis and application of a tool within the reasoning process, which expands the model’s OOD reasoning capabilities.}
    \label{fig:motfig}
    \vspace{-0.5cm}
\end{figure}

In this work, we introduce such a benchmark using Boolean function reconstruction over $\mathrm{GF}(2)$. Boolean circuits can represent any computable function, making them ideal to understand the general reasoning behavior of these models. At the same time, unrestricted circuit recovery is not well suited for measuring stepwise reasoning: in the standard formulation, one typically tries to recover all terms at once, from an exponentially large number of examples. If one naively tries to convert this into a step-wise process, even the evaluation of whether or not an intermediate circuit could produce the final circuit is super exponential in time.  We therefore restrict the task to a structured subclass in which each stage has a unique correct continuation and candidate next steps can be validated in polynomial time.

We view reasoning as three components: information selection, interpretation, and computation. Selection determines which evidence enters the reasoning process, and is shaped less by the structure of the problem than by the priors a model has acquired during training. Removing it serves two purposes. First, it allows us to convert circuit reconstruction, which would otherwise be a one-shot recovery problem given enough data, into a stepwise reasoning problem in which progress at each depth depends on integrating new evidence with prior conclusions. Second, it mirrors how agents acquire information in practice: relevant data arrives incrementally as the agent reasons, rather than being available in full at the outset. To this end, we provide the relevant evidence directly through a per-step sampling oracle and reformulate the task so that the target circuit is revealed one monomial at a time: at step $g$, the model is given the first $g$ terms along with a new batch of labeled examples and must predict the next term. Because each step has a unique correct continuation, we can determine if a proposed solution is the correct next step in constant time, avoiding the super exponential cost of full circuit-synthesis verification. To avoid situations where the model disregards some earlier results and simultaneously recovers several remaining terms, we employ a prefix-conditioned sampling oracle. This oracle provides only a constant number of examples at each step, selected so that the next monomial can be determined only together with the disclosed prefix, and so that later terms do not affect the assigned labels. Within the class of stepwise problems admitting a unique polynomial-time-recoverable continuation from a constant-size batch, the oracle adversarially selects the hardest instance at every depth that remains solvable with probability 1. Any depth-induced collapse observed under this protocol therefore reflects an inability to sustain stepwise OOD reasoning, not an inability to select the right evidence and solve the problem in a single pass.

On this benchmark, standalone language models all fail as reasoning depth increases. Small models fail earlier, but the same pattern also appears in stronger frontier models without tools. This happens even though the correct next step is recoverable in polynomial time from the information provided. The issue is therefore not missing information, but difficulty using it reliably over long reasoning chains.

Tools change this picture. When models can generate, run, and refine code, their performance rises sharply, and even small models can handle instances far beyond what they can do without tools. This indicates that tools not only broaden the kind of problems a model can solve, but also support its ability to generalize.

{\bf Contributions.\enspace} Our contributions are as follows:
\begin{enumerate}[leftmargin=*,itemsep=0pt, parsep=0pt, topsep=0pt]
    \item We introduce a benchmark for measuring stepwise OOD reasoning as depth-dependent next-step accuracy, $\gamma_g$ based on Boolean reconstruction over $\mathrm{GF}(2)$ .

    \item We construct a prefix-conditioned sampling oracle that blocks common shortcuts. In our construction, history-only, data-only, and partial-information solvers remain near chance, while a solver that correctly combines the revealed prefix with the new evidence can identify the next term from constant-size samples with polynomial-time validation and recovery.

    \item We show that standalone LLMs exhibit depth-induced collapse: even models that perform well at shallow depths rapidly lose exact next-step accuracy as the revealed prefix grows, despite the existence of a polynomial-time recovery procedure.

    \item We show that tool use largely bypasses this collapse. In particular, small open-source models that can generate, execute, and revise code sustain high $\gamma_g$ at depths where much stronger standalone models fail.
\end{enumerate}

\section{Related Work}\label{sec:related}

\textbf{Reasoning as search over partial solutions.\enspace}
A common view of LLM reasoning is that the model does not solve a problem in one step, but searches over intermediate states. This view appears in chain-of-thought prompting~\citep{wei2022cot}, self-consistency~\citep{wang2022selfconsistency}, Tree-of-Thought search~\citep{yao2023tot}, and iterative propose--critique--revise methods such as Reflexion~\citep{shinn2023reflexion}. These methods improve performance by expanding, filtering, or revising candidate reasoning traces. More broadly, tool-use and scratchpad methods give models an external medium for maintaining intermediate state and checking substeps~\citep{nye2021workscratchpadsintermediatecomputation,schick2023toolformer,hao2023toolkengpt,parisi2022talmtoolaugmentedlanguage,10.1145/3696410.3714825}. 

Our work studies a more basic question underlying these approaches: when a model is placed inside a search procedure, does it continue to assign nontrivial probability to the correct next step as the reasoning depth grows? Search-based inference is only useful if useful continuations remain reachable. 

\textbf{The Diligent Learner and the non-vanishing $\gamma$ assumption.\enspace}
The Diligent Learner framework~\citep{shalev2025diligent} formalizes this process. It models reasoning as validator-guided search over semantic states, where progress depends on the probability that the model proposes a next action that keeps the current prefix completable. Denoting this probability by $\gamma$, the framework shows that search can remain efficient when $\gamma$ is bounded away from zero. 

Our paper targets the empirical gap left open by this theory. The Diligent Learner framework identifies the importance of non-vanishing stepwise progress, but does not measure whether modern LLMs satisfy this condition. In this work, we produce an empirical upper bound for this quantity and show it collapses without tool calls.  

\textbf{Benchmarks for reasoning and their shortcut problem.\enspace}
LLM reasoning is commonly evaluated on mathematical, logical, coding, and agentic benchmarks, including grade-school word problems~\citep{cobbe2021trainingverifierssolvemath}, competition mathematics~\citep{hendrycks2021measuring}, challenging subsets of BIG-Bench~\citep{suzgun2022challengingbigbenchtaskschainofthought}, software engineering tasks~\citep{jimenez2024swebench,yang2025swebench}, and interactive environments for tool-using agents~\citep{ALFWorld20,zhou2024webarenarealisticwebenvironment,qin2024toolllm,liu2025agentbenchevaluatingllmsagents}. Synthetic and controlled-distribution tasks also test compositional generalization and mathematical structure~\citep{frieder2025datamathematicalcopilotsbetter}. 

These benchmarks are valuable, but they usually measure end-to-end success rather than isolated stepwise progress. Many also admit multiple valid solution paths, contain redundant information, or permit partial-pattern strategies that do not require using all available evidence. This makes them poorly suited for testing whether a model can sustain OOD reasoning over a long chain of forced intermediate steps. 


\textbf{Tool use as externalized computation.\enspace}
Another line of work augments LLMs with programs, APIs, calculators, retrieval systems, or other tools~\citep{schick2023toolformer,hao2023toolkengpt,gao2023pal,chen2022program,cai2023large}. In many of these systems, tool use is framed as a way to expand what the model can do: the model delegates arithmetic, code execution, search, retrieval, or environment interaction to an external component. This perspective also appears in agent frameworks and multi-agent systems, where one model can call another model, verifier, planner, or environment-grounded actor as a specialized subroutine~\citep{wu2023autogen}.

Our results suggest a sharper role for tools. In our benchmark, the relevant information is already present, and the next step is recoverable in polynomial time. The failure of standalone LLMs is therefore not caused by missing knowledge, but by the need to carry out computation inside the model's own token-by-token reasoning process. This complements work such as program-aided reasoning and Reflexion: the relevant effect is not only that programs or feedback loops improve final answers, but that executable state can prevent the per-step success probability from decaying as the horizon grows. Tool synthesis changes this burden: the model can specify the computation once, execute it externally, inspect the result, and revise the implementation if needed. Thus, tools do not merely add capabilities; they can reduce the effective reasoning depth that must be handled internally by the LLM, helping prevent depth-induced collapse.

\section{Theory and Benchmark}\label{sec:theory}

In this section, we formalize the stepwise GF(2) reconstruction task used to empirically bound $\gamma_g$. By framing reasoning as an iterative decoding process, we require the model to extend an evolving partial solution (the prefix) by integrating it with a new batch of step-specific evidence. The core of our construction is a statistical obfuscation mechanism to block three prevalent shortcuts: (i) \emph{data-only} prediction that attempts to infer the next term without the historical context, (ii) \emph{statistical-leakage} prediction that uses statistics about the data and problem specification to make educated guesses without truly incorporating all data and (iii) \emph{history-only} prediction that relies purely on extrapolating the prefix without considering the new observations.

\subsection{Problem formulation}\label{sec:theory:estimators}

We instantiate this in the reconstruction of Boolean functions over GF(2) under a fixed-prefix statistical obfuscation sampling oracle.

{\bf Targets in algebraic normal form (ANF) over GF(2).\enspace}
Let $x=(a,v)\in\{0,1\}^{n+p}$ where $a=(a_1,\dots,a_n)\in\{0,1\}^n$ are \emph{address bits} and
$v=(v_1,\dots,v_p)\in\{0,1\}^p$ are \emph{payload bits}. We consider Boolean targets represented in Algebraic Normal Form (ANF) as XORs of monomials $f(a,v) = \bigoplus_{j=1}^{n} t_j(a,v)$, where $t_j(a,v):= a_j\,M_j(v)$, $M_j(v):=\prod_{i\in S_j} v_i$, such that each support $S_j\subseteq[p]$ has fixed size $|S_j|=d-1$.
Thus each term has payload-degree $d{-}1$ and total degree $d$ in $(a,v)$.
An \emph{instance} is specified by the ordered sequence of supports $(S_1,\dots,S_n)$
(equivalently, the ordered ANF terms $(t_1,\dots,t_n)$), sampled once and then fixed.

{\bf Stepwise reconstruction game and step-success.\enspace}
At step $g\in\{0,\dots,n-1\}$ the learner is given (i) the ordered prefix $P_g:=(t_1,\dots,t_g)$ and
(ii) a new labeled sample set $\mathsf S_g:=\{(x^{(k)},y^{(k)})\}_{k=1}^K$ generated by the step-$g$ oracle.
The learner outputs a candidate monomial $\hat t$ and succeeds iff $\hat t=t_{g+1}$. We define the benchmark step-success probability:
\begin{equation}\label{eq:gamma_exact}
\gamma_g \;:=\; \Pr_{(P_g,\mathsf S_g,t_{g+1}),\,\hat t\sim\pi_\theta(\cdot\mid P_g,\mathsf S_g)}\big[\hat t=t_{g+1}\big],
\end{equation}
where the probability is over instance generation, oracle sampling, and model stochasticity.
Because the instance commits to an ordered sequence, at depth $g$ there is a unique correct extension $t_{g+1}$,
so $\gamma_g$ directly operationalizes step-success in our benchmark.

{\bf Estimator classes.\enspace} We distinguish solvers by their information access: (i) \textbf{Diligent Estimator} ($\mathcal A_g$): access to $(P_g,\mathsf S_g)$; (ii) \textbf{Data-only Estimator} ($\mathcal B_g$): access to $\mathsf S_g$ but not $P_g$; (iii) \textbf{History-only Estimator} ($\mathcal C_g$): access to $P_g$ but not $\mathsf S_g$; (iv) \textbf{Partial Estimator} ($\mathcal D_g$): partial access to  $P_g$ and $\mathsf S_g$.
Let $\gamma_g^{\mathcal X}$ denote the exact-next success probability for class $\mathcal X$ at step $g$. Our benchmark is designed to enforce a separation of the form
\begin{small}
\begin{equation}\label{eq:separation_goal}
\min_g \gamma_g^{\mathcal A} \;\ge\; Q
\quad\text{while}\quad
\gamma_g^{\mathcal B},\,\gamma_g^{\mathcal C},\,\gamma_g^{\mathcal D}
\;\approx\; \tfrac{1}{\binom{p}{d-1}}.
\end{equation}
\end{small}
for a nontrivial constant $1\geq Q \gg 0$.

\begin{figure}[t]
    \centering
    \includegraphics[width=\linewidth]{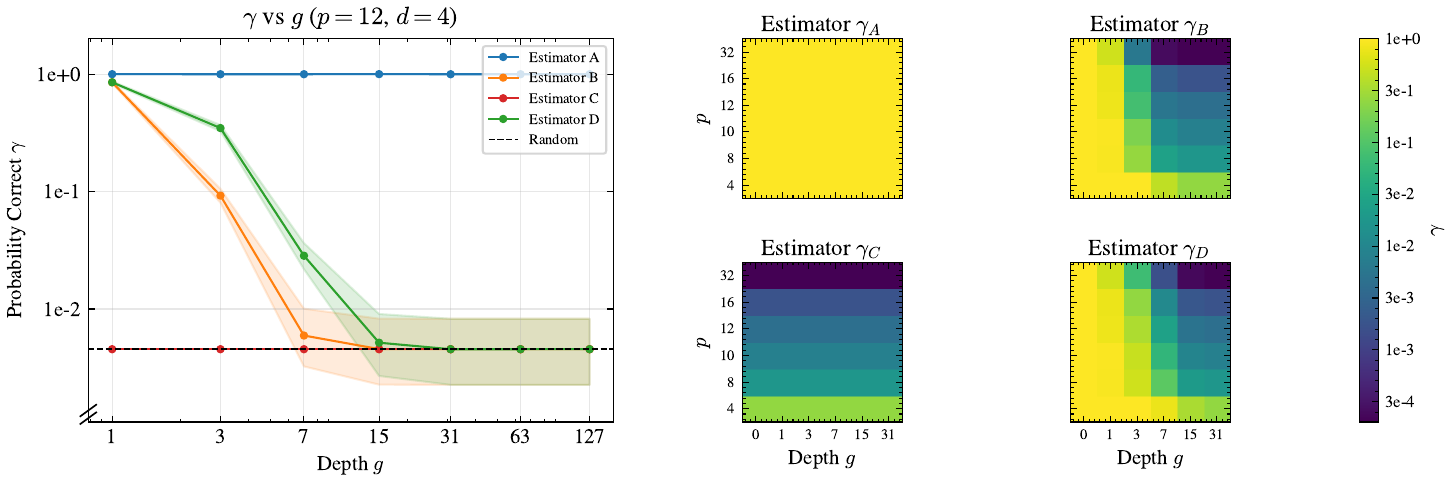}

    \caption{\small
    \textbf{Left:} Only history and data sustain reliable next-step prediction. We plot step success $\gamma_g$ (the probability mass assigned to the correct next monomial) versus depth $g$ for each estimator class. Curves show the mean over $2000$ generated circuits per depth, with shaded Jeffreys intervals.
    \textbf{Right:} As both $g$ and $p$ increase, estimators with imperfect information tend to collapse toward zero probability mass on the correct next monomial. Only estimator $\mathcal{A}$ (history+data) consistently maintains high $\gamma_g$. In contrast, $\mathcal{B}$ (data-only) and $\mathcal{D}$ (partial) often collapse toward zero, while $\mathcal{C}$ (history-only) remains near chance. The heatmap is based on $200$ generated circuits for each hyperparameter setting.}
    
    \label{fig:gamma_simulations}
\end{figure}

\subsection{Statistical Obfuscation: Construction and Guarantees}\label{sec:theory:construction_and_guarantees}

{\bf Theoretical objectives.\enspace}
Our benchmark is designed to block shortcut reasoning so that problem complexity scales predictably with depth. To achieve this, we introduce a \textit{statistical obfuscation sampling oracle} in which the revealed history acts as a cryptographic key that unmasks the information needed to identify the correct next term. For the benchmark to satisfy this goal, the construction must provide three guarantees:
\begin{enumerate}[leftmargin=*]
    \item \emph{No history-only shortcuts:} The sequence of prior steps provides no predictive information about the \emph{next} step without the new data.
    \item \emph{No statistical leakage:} The labels are not heavily biased toward $0$ or $1$; otherwise, a model could guess the answer without reasoning.
    \item \emph{No data-only shortcuts:} The new data provides negligible information without the revealed history.
\end{enumerate}

Below, we detail the construction of our instances and oracle, proving how each component satisfies these three objectives.

{\bf Instance Generation.\enspace} 
We first generate the underlying \emph{instance}. We sample an ordered sequence of payload supports $S_1,\dots,S_n\subseteq[p]$ with $|S_j|=d-1$ once per instance. This fixes the ordered Algebraic Normal Form (ANF) terms $(t_1,\dots,t_n)$. 

Because the sequence of supports is sampled independently, the history prefix provides no mutual information about the upcoming support. This guarantees that history-only solvers fail.

\begin{restatable}[History-only is prior guessing]{lemma}{HistoryOnly}\label{lem:history_only}
Assume the instance distribution samples supports $S_1,\dots,S_n$ i.i.d.\ uniformly (\emph{with replacement}) from $\{S\subseteq[p]:|S|=d-1\}$. Then for any $g<n$, conditioned on the revealed prefix $P_g$, the next support $S_{g+1}$ is uniform over $\{S\subseteq[p]:|S|=d-1\}$ and independent of $P_g$. Consequently, any history-only estimator satisfies $\Pr[\widehat S = S_{g+1}] \le \tfrac{1}{\binom{p}{d-1}}$.
\end{restatable}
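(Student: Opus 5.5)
The argument has two parts: first that $P_g$ carries no information about $S_{g+1}$, then that any function of $P_g$ guesses $S_{g+1}$ with probability at most one over the number of possible supports.

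\textbf{Step 1: the prefix is a function of $(S_1,\dots,S_g)$.} Write $\mathcal S_{d-1}:=\{S\subseteq[p]:|S|=d-1\}$ and $N:=\binom{p}{d-1}=|\mathcal S_{d-1}|$. The revealed prefix $P_g=(t_1,\dots,t_g)$ is determined by the supports, since $t_j(a,v)=a_j\prod_{i\in S_j}v_i$. Conversely, each $t_j$ determines $S_j$, because distinct monomials in ANF are distinct polynomials. So $P_g$ is a deterministic (indeed bijective) function of $(S_1,\dots,S_g)$, and conditioning on $P_g$ is the same as conditioning on $\sigma(S_1,\dots,S_g)$. Only measurability matters here, so bijectivity is not really needed.

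\textbf{Step 2: conditional uniformity and independence.} The supports are i.i.d.\ uniform on $\mathcal S_{d-1}$. For any $s_1,\dots,s_g,s\in\mathcal S_{d-1}$ we have
\[
\Pr[S_{g+1}=s \mid S_1=s_1,\dots,S_g=s_g]=\Pr[S_{g+1}=s]=1/N .
\]
Every conditioning event has positive probability $N^{-g}$, since sampling is with replacement. Pushing this through the function in Step 1 shows that $S_{g+1}$ is uniform given $P_g$ and independent of it.

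\textbf{Step 3: bound for an arbitrary history-only estimator.} A history-only estimator outputs $\widehat S=\phi(P_g,R)$, where $R$ is internal randomness independent of the instance. If it outputs a monomial $\hat t$, we map it to its payload support, and any output outside $\mathcal S_{d-1}$ counts as failure. Conditioning on $(P_g,R)$ and using the independence from Step 2,
\[
\Pr[\widehat S=S_{g+1}\mid P_g,R]=\Pr[S_{g+1}=\phi(P_g,R)\mid P_g]\le 1/N .
\]
Taking expectations gives the claim. The only care needed is that $R$ is independent of $S_{g+1}$ jointly with $P_g$, which is built into the definition of the estimator.

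I expect no real obstacle. The only subtle point is Step 1: one must check that the prefix exposes nothing beyond $S_1,\dots,S_g$, such as instance-level metadata correlated with $S_{g+1}$. Since $n$, $p$ and $d$ are fixed constants, this holds.
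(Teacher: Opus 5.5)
Your proposal is correct and follows the paper's reasoning. The paper gives no separate appendix proof: it justifies the lemma only by noting that the supports are sampled i.i.d., so the prefix carries no information about $S_{g+1}$, and a prefix-only guess therefore succeeds with probability at most the uniform prior mass $1/\binom{p}{d-1}$. Your Steps 1--3, including the handling of internal randomness and invalid outputs, are a careful formalization of exactly that argument.
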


{\bf Payload Distribution.\enspace} 
To prevent trivial statistical leakage, we fix a Hamming weight $w$ and sample payloads uniformly from $\{v\in\{0,1\}^p:\|v\|_0=w\}$. Under this distribution, every monomial fires with probability $\rho(w) = \binom{w}{d-1}/\binom{p}{d-1}$ (Lem.~\ref{lem:monomial_weight}, App.~\ref{appendix:lemma-proofs}). We choose the weight $w^\star$ that makes $\rho$ as close to $1/2$ as possible, ensuring that monomial evaluations are nearly balanced and eliminating bias shortcuts.

{\bf Obfuscation Sampling.\enspace} 
Finally, we define the \emph{step-$g$ sampling oracle}. Given an instance and depth $g\in\{0,\dots,n-1\}$, the oracle returns a new labeled set $\mathsf S_g=\{(a^{(k)},v^{(k)},y^{(k)})\}_{k=1}^K$ by sampling i.i.d.\ examples as follows: 
(i) Set the target address bit $a_{g+1}=1$ and set all future bits $a_j=0$ for all $j>g+1$; 
(ii) Sample the prefix address bits $a_1,\dots,a_g \stackrel{i.i.d.}{\sim}\mathrm{Bernoulli}(1/2)$; 
(iii) Sample the payload $v$ uniformly from $\{v:\|v\|_0=w^\star\}$; 
(iv) Output the label $y := f(a,v)$.

Because $a_{g+1}=1$ and all subsequent address bits are $0$, the resulting label naturally decomposes into a mask and a signal:
\begin{equation}\label{eq:otp_decomp}
y \;=\; \underbrace{\Big(\bigoplus_{j=1}^{g} a_j\, M_j(v)\Big)}_{\text{prefix obfuscation}}
\;\oplus\;
\underbrace{M_{g+1}(v)}_{\text{next-term signal}}.
\end{equation}

This mask is computable from the prefix $(P_g, x)$, allowing a diligent solver to recover the signal. To a data-only estimator lacking the prefix, however, the mask acts as a high-entropy obfuscator: the Bayes advantage from any single sample decays as $\tfrac{1}{2}|1-2\rho|^{m}$, where $m$ is the number of active prefix bits (Lem.~\ref{lem:bayes_masking}, App.~\ref{appendix:lemma-proofs}). Since $a_1,\dots,a_g\stackrel{i.i.d.}{\sim}\mathrm{Bernoulli}(1/2)$, a typical example has $m \approx g/2$, so when $\rho\approx 1/2$ the labels reveal essentially nothing about the next-term signal without the prefix. For a fixed batch size, this remains true for the whole step-specific batch at large depth: with high probability every row contains many active prefix bits, and therefore every row is strongly masked (Cor.~\ref{cor:batch_masking}). This is the regime in which our long-depth evaluations operate; the estimator simulations in Fig.~\ref{fig:gamma_simulations} also test whether finite-depth joint correlations create a practical data-only shortcut.

{\bf Recoverability in polynomial time for diligent solvers.\enspace} See App.~\ref{appendix:poly-time-recovery}.

{\bf Sample complexity.\enspace} Once the prefix mask is removed, each step reduces to recovering a single Boolean conjunction, independent of depth~$g$. Thm.~\ref{thm:recoverability_poly} gives a polynomial-time decoder using $O(\log(p/\delta))$ samples for fixed degree and constant firing probability. Thus the full-information solver needs only a constant-size step-specific batch, while for data-only solvers the same fixed batch is exponentially masked as depth grows.

\subsection{Benchmark Protocol}
\label{sec:benchmark}

This section describes how the construction in Sec.~\ref{sec:theory} is instantiated as an LLM benchmark.

{\bf Prompt structure.\enspace} Each step of the reconstruction game is presented as a single-turn completion problem. The variables $(a,v)$ are flattened into a single ordered sequence
$x_0,\dots,x_{N-1}$. Each prompt contains three components:
(1) global metadata, including the ambient dimension $N$ and monomial degree;
(2) the revealed prefix $P_g$, written as logical clauses; and
(3) the step-specific evidence $\mathsf S_g$, formatted as $K=32$ space-separated binary rows to reduce tokenization artifacts.

The prompt also specifies the active address variable $x_g$ and the partition point $n$. Thus, the model only needs to identify the remaining $d-1$ payload variables of $M_{g+1}$ among indices in $[n,N-1]$. Since models do not always follow the requested output format, we use a regex-based parser to extract the predicted set of payload indices from the completion.

{\bf Validation and evaluation.\enspace} By construction, each benchmark step has a unique correct continuation; see Section~\ref{sec:theory:estimators}. Validation therefore reduces to checking whether the predicted payload-index set exactly matches the ground-truth set. This gives an $O(d)$ validator, avoiding the high cost of verifying arbitrary intermediate circuit descriptions.

We estimate the depth-dependent step-success probability $\gamma_g$ using exact-match accuracy. To measure how proposal quality changes with reasoning depth, we evaluate at exponentially spaced depths, $g = 2^k - 1$. This isolates depth-induced degradation in next-step prediction while keeping the evaluation focused on the same local quantity required by the Diligent Learner analysis.

\section{Empirical Results}\label{sec:results}

We now validate our benchmark through experiments with perfect Bayesian estimators and compare the performance with that of modern LLMs. 

\begin{figure}[t]
    \centering
    \includegraphics[width=\linewidth]{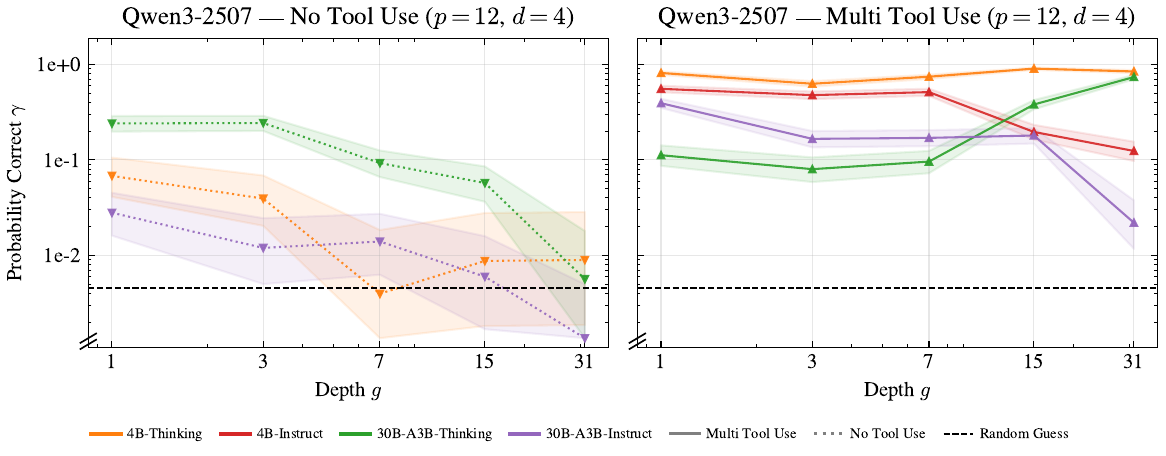}
  \caption{\small {\bf (\textbf{right}) Tool synthesis dramatically improves small LLM performance, approaching optimal Bayesian estimation. (\textbf{left}) Without tools, models exhibit depth-induced collapse in next-step prediction.} The figure shows step-success $\gamma_g$ (probability mass on the correct next monomial) versus circuit depth $g$ for Qwen3-2507 models under adversarial sampling ($p=12$, $d=4$). Despite a polynomial-time decoder existing at every step (Thm.~\ref{thm:recoverability_poly}), all models degrade with depth. Larger and thinking variants perform better at shallow depths, but performance drops sharply at intermediate depths, approaching the trivial baseline $\gamma_{\mathrm{triv}}$. This limit highlights an inability to maintain the prefix-conditioned cancellation required to find the correct next step.}

    \label{fig:small_llm_gamma_vs_g}

\end{figure}

\subsection{Estimator Simulations}

We first check that the benchmark operates as designed. Sec.~\ref{sec:theory:estimators} introduced four classes of estimators, each with different levels of access to the prefix and to the new evidence. The theoretical analysis predicts that a solver that can use both sources of information should succeed, whereas solvers that rely solely on the history, solely on the data, or on only partial access should fail. Fig.~\ref{fig:gamma_simulations} exhibits the expected behavior. For $g \ge 31$, no partial-information estimator is able to solve the task and as $g$ increases the performance of all estimators besides the full information estimator $\mathcal{A}$ degrades regardless of the number of inputs $p$. Additional non-adversarial comparisons are shown in Fig.~\ref{fig:adv-comp}. This demonstrates that the benchmark is indeed behaving as it was designed.

\subsection{Small LLMs}~\label{sec:results:small}

We next ask whether small language models behave more like the diligent estimator or like the partial-information baselines. To test this, we evaluated four Qwen3-2507 models: \texttt{4B-Instruct}, \texttt{4B-Thinking}, \texttt{30B-A3B-Instruct}, and \texttt{30B-A3B-Thinking} \citep{qwen3_2507}. We ran inference in vLLM on $3000$ generated instances, split evenly across $g\in\{0,1,3,7,15,31\}$, using adversarial sampling with $(p,d)=(12,4)$. Consistent decoding required long contexts ($32{,}768$ tokens for the 4B models and $81{,}920$ for the 30B models). All small LLM results were collected in under $80$ hours of A100 compute.

Fig.~\ref{fig:small_llm_gamma_vs_g} shows a clear depth-induced collapse. All models degrade as $g$ increases, despite the fact that the next step remains recoverable in polynomial time from the information provided (Thm.~\ref{thm:recoverability_poly}). \texttt{4B-Instruct} performs indistinguishably from random guessing even at the easiest settings, so we omit it for clarity. The stronger models perform better at shallow depths, but their next-step accuracy still drops sharply as the reasoning horizon grows. In particular, \texttt{Qwen3-30B-A3B-Thinking} improves substantially over its instruct counterpart at small $g$, yet still deteriorates and collapses by $g=31$, approaching the trivial baseline $\gamma_{\mathrm{triv}}$.

This pattern matches our predictions. The benchmark is designed so that progress requires using more and more of the revealed prefix to cancel the oracle mask and isolate the next signal. A model that cannot reliably carry that prefix-conditioned computation forward should therefore drift toward the partial-information regime as depth increases. 

\begin{figure}[t]
    \centering
    \includegraphics[width=\linewidth]{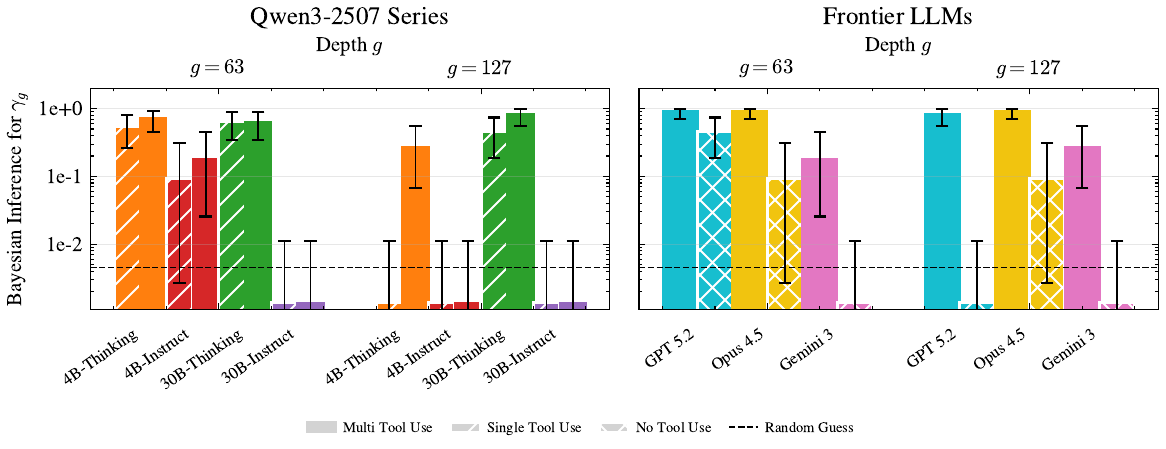}
    \caption{\small {\bf Tool synthesis enables reliable performance at very deep reasoning depths.} When able to build and iterate on tools, even small models (\textbf{left}) rival frontier models (\textbf{right}). Frontier models leveraging tools maintain a high $\gamma_g$ with minimal degradation, even at significant depths. Prohibiting tools drops performance substantially as problem size scales. The Opus no-tool runs are excluded from the no-tool analysis because the model occasionally used tools despite instructions to the contrary. Error bars indicate Jeffreys Bayesian confidence intervals.}
    \label{fig:frontier_depth}

\end{figure}

\begin{figure}[t]
    \centering
    \begin{minipage}[c]{0.45\linewidth}
        \centering
        \includegraphics[width=\linewidth]{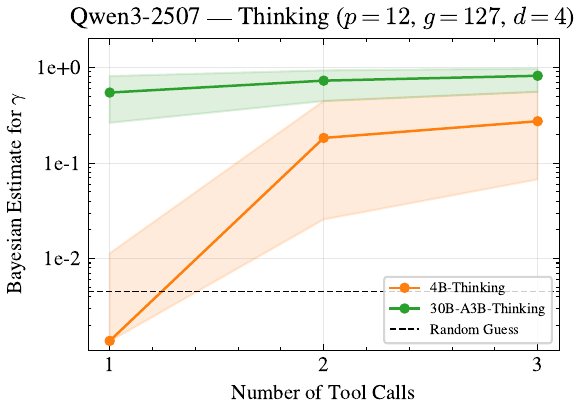}
    \end{minipage}
    \hfill
    \begin{minipage}[c]{0.5\linewidth}
        \caption{\small
        \textbf{Iterative tool use is essential for small models.}
        For \texttt{Qwen3-4B-Thinking}, next-step accuracy improves as the model
        is allowed more code execution and revision cycles. A single generated
        program is often insufficient at large depth, while repeated execution
        feedback enables the model to repair errors and recover the
        prefix-conditioned computation.
        }
        \label{fig:tool_calls_4b_thinking}
    \end{minipage}
\end{figure}

To quantify how much of the prefix each model effectively integrates, we compute $k^\star(g)$, the prefix length at which estimator $\mathcal{D}$'s predicted accuracy matches the LLM's observed accuracy at depth $g$ (Fig.~\ref{fig:k_star_trajectories}). Without tools, $k^\star$ remains at zero because the error roate is high enough that it's not destingishible from a perfect estimator with a that forgets one of the few prefixes available. It then grows with slope $f<1$ for the 30B variants ($f=0.71$--$0.85$), so the gap between $k^\star$ and $g$ widens with depth and accuracy diverges from the full-integration curve. 4B-Thinking's slope of $1.04$ is misleading: at small $g$, partial integration can yield correct answers without full prefix integration, so $k^\star$ tracks $g$ until $g\!\geq\!15$ when full integration becomes necessary and accuracy collapses to the random-guess boundary. 4B-Instruct without tools is omitted as its accuracy was at chance throughout. With tools, the warm-up disappears, slopes reach $0.96$--$1.08$ across all four models, and error rate is essentially independent of depth.

We then tested whether the four Qwen3-2507 models could overcome depth-induced collapse when given the ability to run code. We parsed code blocks from model outputs and executed them on-device in a secure \texttt{RestrictedPython} sandbox, appending the result or error message to the context and allowing up to $10$ iterations \citep{restrictedpython}. For the extended depths $g=63$ and $g=127$, the small-model tool-use runs use the same evaluation set as the frontier-model comparison below. With iterative tool use, small models achieve high accuracy even at these depths, approaching the optimal Bayesian baseline and, in some cases, rivaling frontier models (Fig.~\ref{fig:all_llms_comp}). However, the 4B model fails at $g=127$ when restricted to a single, unrevised tool call (Fig.~\ref{fig:tool_calls_4b_thinking}); its improvement depends on proposing code, observing its behavior, and revising it. This supports our central interpretation: iterative tool use changes the effective reasoning process. Instead of carrying the full prefix-conditioned computation internally across many tokens, the model externalizes it into an executable artifact, inspects failures, and repairs the procedure.

\subsection{Frontier LLMs}\label{sec:results:frontier}
We next evaluate whether depth-induced collapse persists in frontier systems. We test GPT-5.2 with extended thinking, Claude Opus 4.5 with maximum thinking, and Gemini 3 Pro (Jan. 2026). Because frontier models produce long reasoning traces, repeating the full protocol from Sec.~\ref{sec:results:small} was cost-prohibitive. We therefore run a smaller diagnostic evaluation: $60$ queries per model across depths $g\in\{31,63,127\}$ with $(p,d)=(12,4)$, giving $10$ queries per model-depth-tool setting. For each model, half of the prompts prohibit tool use, while the other half allow any strategy, including code generation and execution. We exclude the Opus no-tool condition from the no-tool aggregate because it occasionally used tools despite the prompt-level prohibition.

{\bf Frontier models delay collapse, but do not remove it.\enspace} Fig.~\ref{fig:frontier_depth} shows that frontier models substantially outperform the small no-tool models from Sec.~\ref{sec:results:small}. At depths where small standalone models approach random guessing, frontier models often retain nontrivial next-step accuracy. This indicates that scale and stronger inference-time reasoning improve the effective step-success probability $\gamma_g$. However, the valid no-tool frontier conditions still degrade as depth increases. 

{\bf Tool use stabilizes $\gamma_g$ over long horizons.\enspace} Allowing tool use changes the trend. Tool-enabled frontier models maintain high $\gamma_g$ even at $g=127$, far above the trivial baseline $\gamma_{\mathrm{triv}}$. The same effect appears for small models: once they can generate, execute, and revise code, their performance rises dramatically and can rival frontier no-tool performance.

Each benchmark step requires two operations: inferring the constraints from the prefix and new examples, then executing the corresponding GF(2) computation. Standalone LLMs perform both token-by-token, so as depth grows the prefix-conditioned computation lengthens and becomes brittle, causing $\gamma_g$ to decay; tool use separates these roles by letting the model specify the computation in code, delegate execution to an interpreter, and revise based on observed results, which reduces the internal reasoning burden and preserves stepwise OOD reasoning by externalizing deterministic computation (Fig.~\ref{fig:motfig}).

Overall, the frontier-model results reinforce the main conclusion of the benchmark. Larger models delay depth-induced collapse, but tool use changes the computational structure of the problem. By moving exact symbolic execution out of the model and into an executable artifact, tool-enabled systems maintain a much more stable step-success probability over long reasoning horizons.

\section{Discussion}

This paper provides an empirical test of a central assumption behind the \textit{Diligent Learner} framework: that models maintain a nontrivial next-step success probability as reasoning depth grows. Our adversarial GF(2) reconstruction benchmark blocks shallow pattern matching by requiring each step to combine the revealed prefix with new evidence. The results show a clear divide: standalone language models exhibit depth-induced collapse in $\gamma_g$, behaving increasingly like partial-information estimators, while tool-enabled models sustain much higher $\gamma_g$ by externalizing state tracking and exact computation. This suggests that scalable reasoning may depend less on deeper test-time search alone, and more on giving models reliable interfaces for building, executing, and revising modular tools.

The benchmark is intentionally narrow: it uses symbolic GF(2) reconstruction to create a setting where shortcuts can be controlled and the next step is exactly verifiable. This makes the failure mode easy to measure, but it does not by itself establish how often the same collapse appears in less formal natural-language or interactive tasks. The experiments also measure tool use in a sandboxed code-execution setting, so the gains come with practical costs in latency, execution infrastructure, and interface reliability. Future work should test whether the same depth-dependent behavior appears in broader OOD settings where the relevant state is less algebraic but still must be carried forward over many steps.

\newpage

\bibliography{references}
\bibliographystyle{icml2026}

\onecolumn

\appendix

\section{Other Simulations}~\label{appendix:other-simulations}

See Figs.~\ref{fig:tool_calls_4b_thinking}, \ref{fig:all_llms_comp}, \ref{fig:adv-comp}, and \ref{fig:min_gamma}.

\begin{figure}[t]
    \centering
    \includegraphics[width=0.8\linewidth]{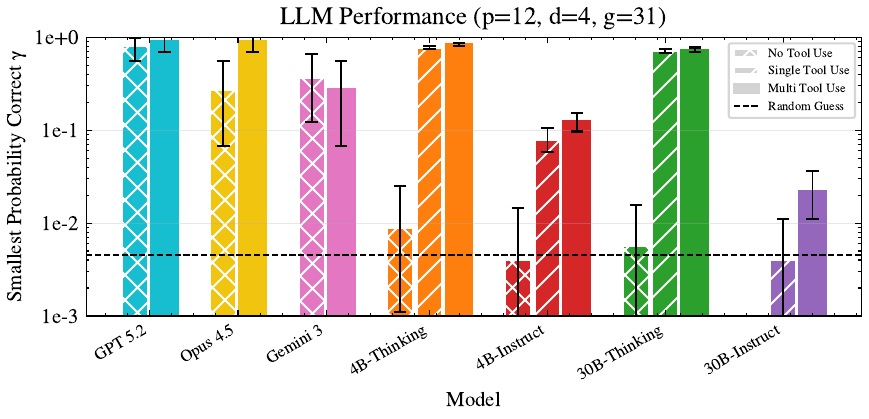}
\caption{\small
{\bf Tool use closes much of the gap between small and frontier models.}
At depth $g=31$, small standalone Qwen models are near the random-guess baseline, whereas frontier models retain substantially higher $\gamma_g$. With code execution, however, small thinking models recover sharply and approach frontier-level performance. This indicates that depth-induced collapse is not determined only by model scale; externalizing the GF(2) computation into tools can substantially stabilize stepwise reasoning.
}
    \label{fig:all_llms_comp}
\end{figure}

\begin{figure}
    \centering
    \includegraphics[width=0.5\linewidth]{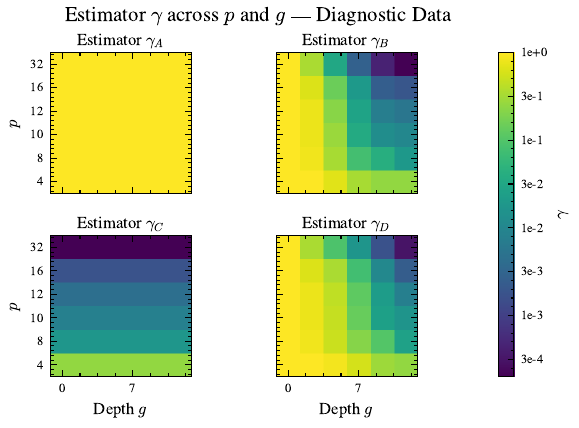}
    \caption{\small As both $g$ and $p$ increase, the probability of an estimator with imperfect information begins to collapse to zero. Only Estimator $\mathcal{A}$ is able to consistently produce the next monomial. The above heatmap was constructed through generating $200$ circuits for each combination of hyperparameters and computing the corresponding $\gamma_g$ for each $p$. This diagram shows the performance when not using the adversarial dataset construction. }
    \label{fig:adv-comp}
\end{figure}

\begin{figure}
    \centering
    \includegraphics[width=0.5\linewidth]{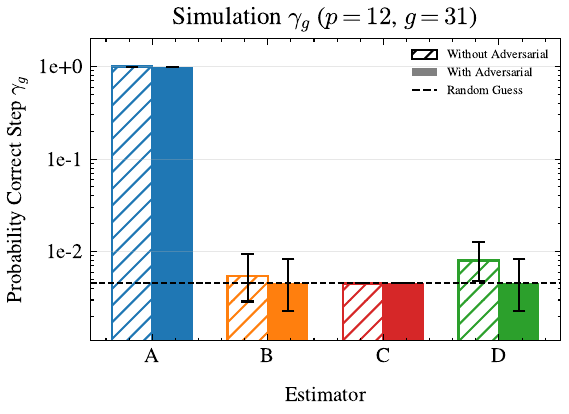}
    \caption{ The figure above shows how $\gamma_g$ changes for Estimators $\mathcal{A}$, $\mathcal{B}$, $\mathcal{C}$, and $\mathcal{D}$ when the data is biased adversarially (as in the results so far) preventing identification of monomials from frequency statistics, and without de-biasing.}
    \label{fig:min_gamma}

\end{figure}

\begin{figure}[t]
\centering
\includegraphics[width=\linewidth]{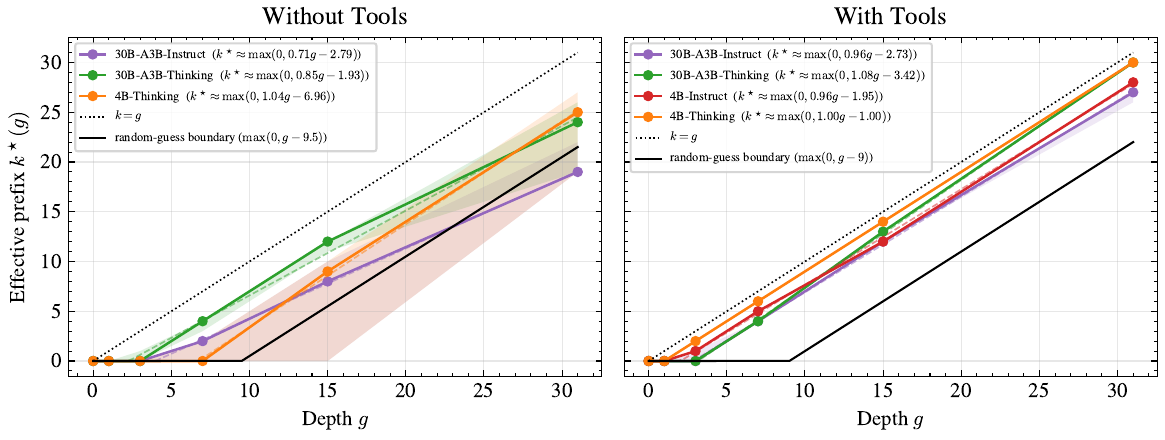}
\caption{\textbf{Slopes below 1 without tools imply diverging error with depth; slopes near 1 with tools imply error rate independent of problem depth.}
Estimator $\mathcal{D}_k$ is the partial-information solver from Sec.~\ref{sec:theory:estimators} given access to only the first $k$ revealed prefix monomials, with predicted next-step accuracy $\gamma_\mathcal{D}(p, g, k)$. For each $(\text{model}, g)$ pair, $k^\star(g) = \arg\min_k |\gamma_{\text{obs}}(g) - \gamma_\mathcal{D}(p, g, k)|$ is the prefix length under which $\mathcal{D}_k$ would reproduce the LLM's observed accuracy, i.e.\ the effective prefix the LLM behaves as if it integrated. Shaded bands are 95\% Jeffreys intervals on $\gamma_{\text{obs}}$ propagated to $k$. Dashed lines are linear fits $k^\star \approx \max(0, fg+a)$; the dotted line $k=g$ marks full integration; the solid black line marks the random-guess boundary, below which $\mathcal{D}_k$ falls to chance and $k^\star$ is unidentified. 4B-Instruct without tools is omitted: its accuracy was indistinguishable from chance at every depth.
Without tools, slopes are below $1$ ($f=0.71$--$1.04$) so the gap between $k^\star$ and $g$ grows with depth, and accuracy correspondingly diverges from full-integration performance. 4B-Thinking's slope of $1.04$ is misleading: at small $g$, partial integration can yield correct answers by chance, so $k^\star$ tracks $g$ until $g\!\geq\!15$ when full integration becomes necessary and accuracy collapses, widening the band to the random-guess boundary. With tools, slopes are essentially $1$ ($0.96$--$1.08$) and bands stay tight, so the error rate is independent of problem depth.}

\label{fig:k_star_trajectories}
\end{figure}

\section{Deferred Lemma Statements and Proofs}~\label{appendix:lemma-proofs}

Below we state and prove the full versions of the lemmas summarized in Section \ref{sec:theory:construction_and_guarantees}.

\begin{lemma}[Monomial firing probability at fixed Hamming weight]\label{lem:monomial_weight}
Fix integers $p\ge d-1\ge 1$ and $w\in\{0,\dots,p\}$.
Let $v$ be uniform over the Hamming sphere $\{v\in\{0,1\}^p:\|v\|_0=w\}$, and fix any $S\subseteq[p]$ with $|S|=d-1$.
Define $M_S(v):=\prod_{i\in S} v_i$.
Then
\[
\Pr \big[M_S(v)=1\big]
=
\begin{cases}
\displaystyle \frac{\binom{w}{d-1}}{\binom{p}{d-1}}
= \frac{\binom{p-(d-1)}{w-(d-1)}}{\binom{p}{w}}
& \text{if } w\ge d-1,\\[8pt]
0 & \text{if } w<d-1.
\end{cases}
\]
\end{lemma}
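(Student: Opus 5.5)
The plan is a direct counting argument on the Hamming sphere, with no earlier results needed. Write $\mathcal V_w:=\{v\in\{0,1\}^p:\|v\|_0=w\}$, so $|\mathcal V_w|=\binom{p}{w}$. Since $v$ is uniform on $\mathcal V_w$, we have $\Pr[M_S(v)=1]=|\{v\in\mathcal V_w: v_i=1\ \forall i\in S\}|/\binom{p}{w}$. The event $M_S(v)=1$ says exactly that the support of $v$ (a $w$-subset of $[p]$) contains $S$.

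\textbf{Case $w<d-1$.} A $w$-subset cannot contain the $(d-1)$-set $S$. The numerator is therefore empty and the probability is $0$.

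\textbf{Case $w\ge d-1$.} Supports containing $S$ are in bijection with choices of the remaining $w-(d-1)$ ones among the $p-(d-1)$ coordinates outside $S$. This gives the numerator $\binom{p-(d-1)}{w-(d-1)}$, and hence the second expression $\binom{p-(d-1)}{w-(d-1)}/\binom{p}{w}$.

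To get the first expression, I will use a symmetry argument rather than factorial manipulation. The uniform distribution on $\mathcal V_w$ is invariant under coordinate permutations, so the support $T$ of $v$ is a uniformly random $w$-subset. Condition on $T$. The probability that a fixed $S$ lies in a uniformly random $T$ equals, by exchangeability, the probability that a uniformly random $(d-1)$-subset of $[p]$ lies in a fixed $w$-set. That probability is $\binom{w}{d-1}/\binom{p}{d-1}$.

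Equivalently, we can double count pairs $(S',T)$ with $S'\subseteq T$, $|S'|=d-1$, $|T|=w$:
\[
\binom{p}{d-1}\binom{p-(d-1)}{w-(d-1)}=\binom{p}{w}\binom{w}{d-1}.
\]
Dividing both sides by $\binom{p}{w}\binom{p}{d-1}$ shows the two closed forms agree.

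There is no real obstacle here. The only point needing care is the identity between the two closed forms, which the double-counting identity settles. I will also note that $p\ge d-1$ keeps $\binom{p}{d-1}>0$, and that for $w<d-1$ the convention $\binom{w}{d-1}=0$ makes the first formula consistent as well.
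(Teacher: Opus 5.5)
Your proof is correct and follows the paper's counting argument: the paper obtains $\binom{p-(d-1)}{w-(d-1)}/\binom{p}{w}$ by counting payloads exactly as you do. The paper gets $\binom{w}{d-1}/\binom{p}{d-1}$ by counting how many of the $\binom{p}{d-1}$ candidate supports lie inside the active set of $v$, which silently swaps a fixed $S$ with a random $v$; your exchangeability and double-counting step is the justification for that swap that the paper leaves implicit.
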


\begin{proof}
The event $M_S(v)=1$ is exactly the event that all coordinates in $S$ are contained in the support of $v$.
If $w<d-1$, this is impossible. Otherwise, among all $\binom{p}{d-1}$ possible supports $S$, exactly $\binom{w}{d-1}$ are subsets of the $w$ active coordinates of $v$, giving the first expression.
Equivalently, one can count payloads: after requiring the $d-1$ coordinates in $S$ to be active, the remaining $w-(d-1)$ active coordinates may be chosen from the other $p-(d-1)$ coordinates, giving $\binom{p-(d-1)}{w-(d-1)}/\binom{p}{w}$.
\end{proof}

\begin{restatable}[Bayes masking given observed $(a,v)$]{lemma}{BayesMasking}\label{lem:bayes_masking}
Assume the instance distribution samples $S_1,\dots,S_g,S_{g+1}$ i.i.d.\ uniformly from $\{S\subseteq[p]:|S|=d-1\}$,
independently of the oracle samples. Fix a step $g$ and condition on a realized example $(a,v)$ with $\|v\|_0=w^\star$.
Let
\[
\rho:=\rho(w^\star)=\Pr_{S}\!\big[M_S(v)=1\big]=\frac{\binom{w^\star}{d-1}}{\binom{p}{d-1}},
\quad
m(a):=\sum_{j=1}^g a_j,
\quad
B(a,v):=\bigoplus_{j=1}^{g} a_j M_j(v).
\]
Then, marginalizing over the unknown prefix supports $(S_1,\dots,S_g)$, for each $r \in \{0,1\}$,
\[
\Pr\big[B(a,v)=r \mid a,v\big]
\;=\;
\tfrac{1}{2}\big[1+(-1)^r(1-2\rho)^{m(a)}\big].
\]
Moreover, $B(a,v)$ is independent of $b=M_{g+1}(v)$ given $(a,v)$, and since $y=B(a,v)\oplus b$ we have
\[
\left|\Pr[y=b\mid a,v]-\tfrac12\right|
\;=\;
\tfrac{1}{2}|1-2\rho|^{m(a)}.
\]
\end{restatable}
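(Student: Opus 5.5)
The plan is to work in the $\pm1$ (Fourier) domain. Write $\chi(B):=(-1)^{B}$, so that for any $\{0,1\}$-valued random variable $Z$ we have $\Pr[Z=r]=\tfrac12\big[1+(-1)^r\,\mathbb E[(-1)^Z]\big]$. With $(a,v)$ conditioned on, the only randomness in $B(a,v)$ comes from the unknown supports $S_1,\dots,S_g$. The first display therefore reduces to showing $\mathbb E[(-1)^{B(a,v)}\mid a,v]=(1-2\rho)^{m(a)}$.

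To compute this expectation, expand $(-1)^{B}=\prod_{j=1}^g (-1)^{a_j M_j(v)}$. Factors with $a_j=0$ equal $1$, so only the $m(a)$ active indices contribute. By hypothesis the $S_j$ are i.i.d.\ and independent of the oracle sample. Hence, conditioned on $(a,v)$, the bits $M_j(v)=M_{S_j}(v)$ for $j\le g$ are i.i.d.\ Bernoulli with parameter $\Pr_S[M_S(v)=1]$. By Lem.~\ref{lem:monomial_weight}, this parameter equals $\rho$, since $\|v\|_0=w^\star$ and the count $\binom{w^\star}{d-1}/\binom{p}{d-1}$ depends only on the weight of $v$. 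Each active factor then has expectation $1-2\rho$, and independence gives $(1-2\rho)^{m(a)}$. An equivalent route is induction on $m$ using the piling-up lemma.

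For the second part, $b=M_{g+1}(v)=M_{S_{g+1}}(v)$ is a function of $S_{g+1}$ alone once $(a,v)$ is fixed, while $B$ is a function of $(S_1,\dots,S_g)$. Since the supports are mutually independent and independent of $(a,v)$, $B\perp b$ given $(a,v)$. Now $y=b$ holds iff $B=0$, so
\[
\Pr[y=b\mid a,v]=\Pr[B=0\mid a,v]=\tfrac12\big[1+(1-2\rho)^{m(a)}\big].
\]
Taking the absolute deviation from $\tfrac12$ gives $\tfrac12|1-2\rho|^{m(a)}$. Strictly, independence is not even needed for this equality; it is needed only to interpret the result as saying that $y$ is a noisy copy of $b$ through a channel independent of $b$.

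The main subtlety is the conditioning step. One must check that conditioning on the realized $(a,v)$ does not bias the distribution of the supports. This holds because the oracle's sampling of $(a,v)$ in steps (i)--(iii) does not depend on the instance, and only the label $y$ does. It is therefore essential that we condition on $(a,v)$ and not on $y$. One should also note the degenerate case $m(a)=0$: then $B\equiv0$, and the formula correctly gives $\Pr[B=0]=1$ with the convention $0^0=1$ when $\rho=1/2$.
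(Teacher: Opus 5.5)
Your proof is correct and follows essentially the same route as the paper. Both arguments condition on $(a,v)$, note that the active $M_j(v)$ are i.i.d.\ Bernoulli$(\rho)$ because the $S_j$ are i.i.d.\ uniform, compute $\mathbb{E}[(-1)^{B}\mid a,v]=(1-2\rho)^{m(a)}$ by factoring, and then use $y=b \iff B=0$ together with the independence of $S_{g+1}$ from the prefix supports.
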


\begin{proof}
Condition on the realized $(a,v)$. For each active prefix address bit $a_j=1$, the random variable $M_j(v)$ is Bernoulli with mean $\rho$, because $S_j$ is uniform over supports of size $d-1$. These variables are independent across $j$ under the i.i.d.\ instance distribution. Terms with $a_j=0$ do not affect the XOR, so $B(a,v)$ is the parity of $m(a)$ independent Bernoulli$(\rho)$ variables.

Writing $m=m(a)$, for independent Bernoulli variables $Z_1,\dots,Z_m$ with $\Pr[Z_i=1]=\rho$, the parity $Z_1\oplus\cdots\oplus Z_m$ satisfies
\[
\mathbb E[(-1)^{Z_1\oplus\cdots\oplus Z_m}]
=\prod_{i=1}^m \mathbb E[(-1)^{Z_i}]
=(1-2\rho)^m.
\]
Since this expectation is $\Pr[\mathrm{parity}=0]-\Pr[\mathrm{parity}=1]$, we obtain
\[
\Pr[B(a,v)=r\mid a,v]
=\tfrac12\big[1+(-1)^r(1-2\rho)^{m(a)}\big].
\]
The next support $S_{g+1}$ is sampled independently of the prefix supports, so $b=M_{g+1}(v)$ is independent of $B(a,v)$ given $(a,v)$. With $y=B(a,v)\oplus b$, the event $y=b$ is exactly $B(a,v)=0$, hence
\[
\left|\Pr[y=b\mid a,v]-\tfrac12\right|
=\left|\Pr[B(a,v)=0\mid a,v]-\tfrac12\right|
=\tfrac12 |1-2\rho|^{m(a)}.
\]
\end{proof}

\begin{corollary}[Large-depth masking for fixed batches]\label{cor:batch_masking}
Fix a step-specific batch size $K$ and let $m_k=\sum_{j=1}^g a_j^{(k)}$ be the number of active prefix bits in row $k$.
For any threshold $\tau<g/2$,
\[
\Pr\!\left[\min_{k\le K} m_k < \tau\right]
\le
K\exp\!\left(-\frac{(g/2-\tau)^2}{g}\right).
\]
On the complementary event, Lem.~\ref{lem:bayes_masking} bounds the Bayes advantage in every row by
\[
\tfrac{1}{2}|1-2\rho|^{\tau}.
\]
Thus, for fixed $K$ and $\rho$ bounded away from $0$ and $1$, the row-wise information available to a data-only estimator decays exponentially with depth.
\end{corollary}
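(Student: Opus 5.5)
The corollary has two parts. The first is a concentration bound for each row's count of active prefix bits, combined over rows with a union bound. The second applies Lem.~\ref{lem:bayes_masking} row by row.

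\textbf{Per-row tail bound.} By step (ii) of the oracle, each row's prefix address bits $a^{(k)}_1,\dots,a^{(k)}_g$ are i.i.d.\ $\mathrm{Bernoulli}(1/2)$, so $m_k\sim\mathrm{Bin}(g,1/2)$ with mean $g/2$. Since each summand lies in $[0,1]$, Hoeffding's inequality gives, for $t=g/2-\tau>0$,
\[
\Pr[m_k<\tau]\le \Pr\big[m_k-g/2\le -t\big]\le \exp\!\left(-\frac{2t^2}{g}\right)\le \exp\!\left(-\frac{(g/2-\tau)^2}{g}\right).
\]
The stated exponent is weaker than Hoeffding's by a factor of $2$, so the last step has slack.

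\textbf{Union bound over rows.} The event $\{\min_k m_k<\tau\}$ is the union of the $K$ events $\{m_k<\tau\}$, so its probability is at most $K\exp(-(g/2-\tau)^2/g)$. This step does not need independence across rows, although the rows are in fact i.i.d.

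\textbf{Masking on the complement.} On the complementary event, every row satisfies $m_k\ge\tau$. Lem.~\ref{lem:bayes_masking} gives advantage $\tfrac12|1-2\rho|^{m_k}$ for row $k$. Because $|1-2\rho|\le 1$, this quantity is nonincreasing in $m_k$, hence at most $\tfrac12|1-2\rho|^{\tau}$.

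\textbf{Exponential decay in depth.} Choose, for example, $\tau=g/4$. The failure probability becomes $K e^{-g/16}$, and the per-row advantage becomes $\tfrac12|1-2\rho|^{g/4}$. For $\rho$ bounded away from $0$ and $1$, we have $|1-2\rho|<1$ strictly, so both quantities decay exponentially in $g$ when $K$ is fixed.

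\textbf{Main subtlety.} The obstacle is interpretive rather than technical. Lem.~\ref{lem:bayes_masking} bounds the advantage of each row individually, after marginalizing over the prefix supports. The corollary claims only this row-wise statement. It does not assert that the information in the whole batch is small, since the same unknown $S_j$ are shared across rows and the rows are therefore not conditionally independent. I would state the conclusion carefully as a row-wise bound, conditioned on the realized $a^{(k)}$, and not attempt a joint bound.
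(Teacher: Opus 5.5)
Your proposal is correct and follows essentially the same route as the paper. You bound each row's tail $\Pr[m_k<\tau]$ with $m_k\sim\mathrm{Bin}(g,1/2)$, take a union bound over the $K$ rows, and then on the complementary event apply Lem.~\ref{lem:bayes_masking} together with monotonicity of $|1-2\rho|^{m}$ in $m$; the only difference is that you use Hoeffding, which gives the sharper exponent $2(g/2-\tau)^2/g$, where the paper cites a standard Chernoff bound, and your caveat that the conclusion is row-wise rather than a joint bound on the batch matches how the paper uses it.
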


\begin{proof}
For each row, $m_k\sim\mathrm{Binomial}(g,1/2)$. A standard Chernoff bound gives
\[
\Pr[m_k<\tau]\le \exp\!\left(-\frac{(g/2-\tau)^2}{g}\right)
\]
for $\tau<g/2$. A union bound over the $K$ rows gives the first claim. If all rows satisfy $m_k\ge \tau$, Lem.~\ref{lem:bayes_masking} gives row-wise Bayes advantage at most $\tfrac{1}{2}|1-2\rho|^{m_k}\le \tfrac{1}{2}|1-2\rho|^\tau$ in each row.
\end{proof}

\section{The Monomial can be Recovered in Polynomial Time}~\label{appendix:poly-time-recovery}

Given $P_g$, a solver can form residual labels
\begin{equation}\label{eq:residual}
r^{(k)}
\;:=\;
y^{(k)} \oplus \bigoplus_{j=1}^{g} a^{(k)}_j M_j\!\big(v^{(k)}\big)
\;=\;
M_{g+1}\!\big(v^{(k)}\big),
\end{equation}
so recovery reduces to identifying the unknown degree-$(d-1)$ support $S_{g+1}$ from labeled payloads.

Let $K_+ := \{k: r^{(k)}=1\}$ and $T:=|K_+|$. Consider the decoder
\begin{equation}\label{eq:intersection_decoder}
\widehat S \;:=\; \bigcap_{k\in K_+} \mathrm{supp}\!\big(v^{(k)}\big),
\end{equation}
which outputs $\widehat S$ if $T\ge 1$ and $|\widehat S|=d-1$, and otherwise declares failure.

\begin{theorem}[Poly-time recovery under fixed-weight payloads]\label{thm:recoverability_poly}
Fix an instance with $p>d-1$ and assume payloads are i.i.d.\ uniform on $\{v\in\{0,1\}^p:\|v\|_0=w^\star\}$ with $w^\star\ge d-1$.
Let $\rho=\rho(w^\star)=\Pr[M_{g+1}(v)=1]$ and $T=|K_+|$. Then $\Pr[T=0]=(1-\rho)^K$.
Moreover, conditioned on $T\ge 1$, the decoder succeeds with probability at least
\begin{equation}\label{eq:recover_bound_T}
1 - \min\Big\{1,\; (p-(d-1))\Big(\frac{w^\star-(d-1)}{p-(d-1)}\Big)^T\Big\}.
\end{equation}
\end{theorem}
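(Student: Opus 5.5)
The proof has two parts: the distribution of $T$, and a union bound over the coordinates outside $S_{g+1}$.

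\emph{Distribution of $T$.} By the residual identity \eqref{eq:residual}, once $P_g$ is given each residual satisfies $r^{(k)}=M_{g+1}(v^{(k)})$ exactly. The payloads are i.i.d.\ uniform on the weight-$w^\star$ sphere, so by Lem.~\ref{lem:monomial_weight} the indicators $r^{(k)}$ are i.i.d.\ Bernoulli$(\rho)$. Hence $T\sim\mathrm{Binomial}(K,\rho)$ and $\Pr[T=0]=(1-\rho)^K$.

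\emph{Success given $T\ge 1$.} Write $S:=S_{g+1}$. Every $k\in K_+$ has $M_S(v^{(k)})=1$, so $S\subseteq\mathrm{supp}(v^{(k)})$. Therefore $S\subseteq\widehat S$ always, and the decoder fails only if some coordinate $i\notin S$ also survives the intersection. We condition on $T=t\ge1$, or more precisely on the index set $K_+$. Given $M_S(v)=1$, the payload $v$ is uniform over weight-$w^\star$ vectors containing $S$. Its remaining $w^\star-(d-1)$ active coordinates then form a uniform subset of the $p-(d-1)$ coordinates outside $S$, so a fixed $i\notin S$ lies in $\mathrm{supp}(v)$ with probability $\frac{w^\star-(d-1)}{p-(d-1)}$. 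The payloads are i.i.d., and conditioning on the labels of all rows acts independently on each row. So the rows in $K_+$ are i.i.d.\ draws from this conditional law, and
\[
\Pr\big[i\in\widehat S \,\big|\, K_+\big]=\Big(\tfrac{w^\star-(d-1)}{p-(d-1)}\Big)^{t}.
\]

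\emph{Union bound.} Summing over the $p-(d-1)$ coordinates outside $S$ gives
\[
\Pr\big[\widehat S\ne S \,\big|\, K_+\big]\le (p-(d-1))\Big(\tfrac{w^\star-(d-1)}{p-(d-1)}\Big)^{t}.
\]
Since $S\subseteq\widehat S$ always holds, $\widehat S=S$ is equivalent to $|\widehat S|=d-1$, so the decoder's size check outputs the correct set exactly in this case. Capping the failure probability at $1$ gives \eqref{eq:recover_bound_T} for $T=t$. Because the bound is a function of $T$, it remains valid conditioned on $T\ge1$, reading $T$ as the realized count; averaging over $T$ would only make it looser.

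The step needing most care is the conditional independence: after conditioning on which rows fired, the positive rows must still be i.i.d.\ from the conditional distribution. This holds because the rows are independent and each row's conditioning event depends only on that row. The case $w^\star=d-1$ needs no separate treatment, since the bound then gives failure probability $0$, which is correct.
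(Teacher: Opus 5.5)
Your proposal is correct and follows the paper's proof essentially step for step. Both arguments use the containment $S_{g+1}\subseteq\widehat S$, the per-coordinate conditional inclusion probability $\frac{w^\star-(d-1)}{p-(d-1)}$, the conditional i.i.d.\ structure of the positive rows given $K_+$, a union bound over the $p-(d-1)$ extraneous coordinates, and $T\sim\mathrm{Bin}(K,\rho)$ for $\Pr[T=0]$. Your extra remarks are correct details the paper leaves implicit: once $T\ge1$, the size check $|\widehat S|=d-1$ is equivalent to $\widehat S=S_{g+1}$, and the degenerate case $w^\star=d-1$ needs no separate treatment.
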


\begin{proof}
If $r^{(k)}=1$ then $M_{g+1}\!\big(v^{(k)}\big)=1$, which is equivalent to
$S_{g+1}\subseteq \mathrm{supp}\!\big(v^{(k)}\big)$.
Thus for every $k\in K_+$ we have $S_{g+1}\subseteq \mathrm{supp}\!\big(v^{(k)}\big)$, and hence
\[
S_{g+1}\subseteq \bigcap_{k\in K_+}\mathrm{supp}\!\big(v^{(k)}\big) \;=\; \widehat S.
\]
Therefore, conditioned on $T\ge 1$, the intersection decoder can fail only if $\widehat S$ contains at least one
\emph{extraneous} coordinate $i\notin S_{g+1}$, i.e., some $i\in [p]\setminus S_{g+1}$ appears in every positive support.

Fix any $i\notin S_{g+1}$ and consider a single draw $v$ conditioned on $r=1$
(equivalently, $S_{g+1}\subseteq \mathrm{supp}(v)$).
Under the fixed-weight model, after forcing ones on $S_{g+1}$, the remaining $w^\star-(d-1)$ ones are chosen
uniformly among the $p-(d-1)$ coordinates in $[p]\setminus S_{g+1}$. Hence
\[
\Pr[i\in \mathrm{supp}(v)\mid r=1] \;=\; \frac{w^\star-(d-1)}{p-(d-1)}.
\]
Now condition on the value of $T$ and on the index set $K_+$ itself. Because the original examples are i.i.d.,
the payloads $\{v^{(k)}\}_{k\in K_+}$ are i.i.d.\ draws from the conditional distribution $(v\mid r=1)$, so
\[
\Pr \Big[i\in \mathrm{supp}(v^{(k)})\ \forall k\in K_+ \,\Big|\, T\Big]
\;=\;
\Big(\frac{w^\star-(d-1)}{p-(d-1)}\Big)^T.
\]
Taking a union bound over the $p-(d-1)$ possible extraneous coordinates gives
\[
\Pr\big[\widehat S\neq S_{g+1}\mid T\big]
\;\le\;
\big(p-(d-1)\big)\Big(\frac{w^\star-(d-1)}{p-(d-1)}\Big)^T,
\]
which implies \eqref{eq:recover_bound_T}. Finally, since
$T=\sum_{k=1}^K \mathbf{1}\{r^{(k)}=1\}$ with $\Pr[r^{(k)}=1]=\rho=\rho(w^\star)$, we have
$\Pr[T=0]=(1-\rho)^K$.
\end{proof}

\begin{restatable}[High-probability recovery with $K$ samples]{corollary}{RecoverabilityHP}\label{cor:recoverability_hp}
In the setting of Thm.~\ref{thm:recoverability_poly}, let
$\rho := \rho(w^\star)=\Pr[M_{g+1}(v)=1]$ and
$\alpha := \frac{w^\star-(d-1)}{p-(d-1)}\in[0,1)$.
Fix $\delta\in(0,1)$. For $\alpha\in(0,1)$ define
\[
T_0 := \left\lceil \frac{\log\!\big(2(p-(d-1))/\delta\big)}{\log(1/\alpha)} \right\rceil
\quad
(\text{and if }\alpha=0,\text{ take }T_0:=1).
\]
If $K \ge \frac{1}{\rho}\max\{2T_0,\; 8\log(2/\delta)\}$, then the decoder in \eqref{eq:intersection_decoder}
outputs $\widehat S=S_{g+1}$ with probability at least $1-\delta$.
\end{restatable}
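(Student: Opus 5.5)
The plan is to split the failure event into two pieces, each of probability at most $\delta/2$: (i) too few positive residual rows, $T<T_0$; (ii) $T\ge T_0$ but the intersection decoder keeps an extraneous coordinate. A union bound then gives overall success probability at least $1-\delta$.

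For (i), I would use the fact established in the proof of Thm.~\ref{thm:recoverability_poly} that $T=\sum_{k=1}^K \mathbf{1}\{r^{(k)}=1\}$ is a sum of $K$ i.i.d.\ Bernoulli$(\rho)$ variables. Hence $T\sim\mathrm{Binomial}(K,\rho)$ with mean $\mu=K\rho$. The multiplicative Chernoff lower tail gives $\Pr[T\le \mu/2]\le \exp(-\mu/8)$. The hypothesis $K\ge 8\log(2/\delta)/\rho$ yields $\mu\ge 8\log(2/\delta)$, so this probability is at most $\delta/2$. The hypothesis $K\ge 2T_0/\rho$ yields $\mu/2\ge T_0$. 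Therefore $\Pr[T<T_0]\le\Pr[T<\mu/2]\le\delta/2$. Note that $T_0\ge1$ in every case, so on the complement we also have $T\ge1$.

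For (ii), I would condition on $T$ with $T\ge T_0\ge 1$ and apply bound \eqref{eq:recover_bound_T} from Thm.~\ref{thm:recoverability_poly}. The conditional failure probability is at most $(p-(d-1))\alpha^{T}\le (p-(d-1))\alpha^{T_0}$, since $\alpha<1$.
\begin{itemize}
\item If $\alpha\in(0,1)$, the definition of $T_0$ gives $T_0\log(1/\alpha)\ge\log(2(p-(d-1))/\delta)$. Hence $(p-(d-1))\alpha^{T_0}\le\delta/2$.
\item If $\alpha=0$, then $w^\star=d-1$. Any single positive row has support exactly $S_{g+1}$, so the conditional failure probability is $0$ once $T\ge1=T_0$.
\end{itemize}
Averaging over $T\ge T_0$ gives $\Pr[\text{fail},\,T\ge T_0]\le\delta/2$.

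Combining the two pieces gives $\Pr[\widehat S\ne S_{g+1}]\le \Pr[T<T_0]+\Pr[\text{fail},\,T\ge T_0]\le\delta$.

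The step needing the most care is the Chernoff step. One has to pick the right form of the bound, $\exp(-\mu\epsilon^2/2)$ with $\epsilon=1/2$, so that it matches the constant $8$ exactly. One also has to handle the boundary between $\le$ and $<$ when relating $T_0$ to $\mu/2$.
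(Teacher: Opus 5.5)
Your proposal is correct and follows the paper's proof essentially step for step. It uses the same split into $\{T<T_0\}$ and decoder failure on $\{T\ge T_0\}$, the same multiplicative Chernoff bound $\exp(-K\rho/8)\le\delta/2$ via $K\rho/2\ge T_0$, and the same bound $(p-(d-1))\alpha^{T_0}\le\delta/2$ from Thm.~\ref{thm:recoverability_poly}. Bounding the joint event $\{\text{fail},\,T\ge T_0\}$ instead of the conditional probability, and treating $\alpha=0$ explicitly, are only slightly more careful versions of the paper's steps.
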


\begin{proof}
Let $T=|K_+|=\sum_{k=1}^K \mathbf{1}\{r^{(k)}=1\}$. Since $r^{(k)}=M_{g+1}\!\big(v^{(k)}\big)$ and the payloads are i.i.d.,
we have $T\sim \mathrm{Bin}(K,\rho)$.

By Thm.~\ref{thm:recoverability_poly}, for any $t\ge 1$,
\[
\Pr[\widehat S\neq S_{g+1}\mid T=t] \;\le\; \big(p-(d-1)\big)\alpha^{t}.
\]
Hence
\[
\Pr[\widehat S\neq S_{g+1}]
\;\le\;
\Pr[T<T_0] \;+\; \Pr[\widehat S\neq S_{g+1}\mid T\ge T_0].
\]

For the first term, our lower bound on $K$ implies $K\rho/2 \ge T_0$, so by a multiplicative Chernoff bound,
\[
\Pr[T<T_0]
\;\le\;
\Pr\!\left[T \le \tfrac12 K\rho\right]
\;\le\;
\exp\!\left(-\tfrac18 K\rho\right)
\;\le\;
\delta/2,
\]
where the last inequality uses $K\rho \ge 8\log(2/\delta)$.

For the second term, on $T\ge T_0$ we have
\[
\Pr[\widehat S\neq S_{g+1}\mid T\ge T_0]
\;\le\;
\big(p-(d-1)\big)\alpha^{T_0}
\;\le\;
\delta/2,
\]
by the definition of $T_0$ (and trivially if $\alpha=0$). Combining the two bounds yields
$\Pr[\widehat S\neq S_{g+1}] \le \delta$.
\end{proof}

Lem.~\ref{lem:bayes_masking} shows a \emph{per-sample} obfuscation property: marginalizing over the unknown prefix supports,
each labeled example provides only exponentially small Bayes advantage about the next-term signal unless one conditions on the revealed
prefix. In the ideal balanced case $\rho(w^\star)=1/2$, this advantage is $0$ whenever $m(a)\ge 1$.
While the trivial leakage case $m(a)=0$ is theoretically possible, its probability $2^{-g}$ becomes exponentially rare as depth increases.
Lem.~\ref{lem:history_only} rules out history-only shortcuts under the instance distribution, since $S_{g+1}$ remains uniform given the
revealed prefix.

Finally, Thm.~\ref{thm:recoverability_poly} (and Corollary~\ref{cor:recoverability_hp}) show that a diligent solver can subtract the prefix mask to obtain residual labels $r^{(k)}=M_{g+1}(v^{(k)})$ and recover $t_{g+1}$ in polynomial time from
$K=O\!\big(\tfrac{1}{\rho(w^\star)}\log(p/\delta)\big)$ samples with failure probability at most $\delta$ (for fixed $d$ and constant $\rho(w^\star)$). Together, these properties justify using $\gamma_g$ in \eqref{eq:gamma_exact} as an operational measure of step success in our benchmark.

{\bf Efficient validation.\enspace} Given the instance specification (in particular $S_{g+1}$) and a candidate monomial $\tilde t$,
the validator parses $\tilde t$, rejects unless it contains exactly one address variable (which must be $a_{g+1}$) and exactly $d-1$ distinct payload variables, then accepts iff the parsed payload index set $\widetilde S$ equals $S_{g+1}$. This runs in time $O(|\tilde t| + d\log d)$ worst-case (or $O(|\tilde t| + d)$ expected with hashing).


\end{document}